\begin{document}

\title{Achieving User-Side Fairness in Contextual Bandits}

\author{Wen Huang \and Kevin Labille
\and Xintao Wu \thanks{University of Arkansas, \{wenhuang, kclabill, xintaowu\}@uark.edu. The first two authors have equal contribution.} \and Dongwon Lee \thanks{Penn State University, dongwon@psu.edu} \and Neil Heffernan \thanks{Worcester Polytechnic Institute, nth@wpi.edu}}

\date{}

\maketitle

\begin{abstract}
Personalized recommendation based on multi-arm bandit (MAB) algorithms  has shown to lead to high utility and efficiency as it can dynamically adapt the recommendation strategy based on feedback. However, unfairness could incur in personalized recommendation. In this paper, we study how to achieve user-side fairness in personalized recommendation. We formulate our fair personalized recommendation as a modified contextual bandit and focus on achieving fairness on the individual whom is being recommended an item as opposed to achieving fairness on the items that are being recommended.  We introduce and define a metric that captures the fairness in terms of rewards received for both the privileged and protected groups. We develop a fair contextual bandit algorithm, Fair-LinUCB, that improves upon the traditional LinUCB algorithm to achieve group-level fairness of users. Our algorithm detects and monitors unfairness while it learns to recommend personalized videos to students to achieve high efficiency. We provide a theoretical regret analysis and show that our algorithm has a slightly higher regret bound than LinUCB. We conduct numerous experimental evaluations to compare the performances of our fair contextual bandit to that of LinUCB and show that our approach achieves group-level fairness while maintaining a high utility. 
\end{abstract}

%\keywords{Contextual Bandit, Fairness, Online Recommendation} 

%\maketitle
%\pagestyle{plain} 
\section{Introduction}

Personalized recommendation based on multi-arm bandit (MAB) algorithms  has become a popular topic of research and shown to lead to high utility and efficiency \cite{DBLP:conf/cec/BouneffoufRA20} as it dynamically adapts the recommendation strategy based on feedback. However, it is also known that such personalization could incur biases or even discrimination that can influence decisions and opinions
\cite{epstein2015search,farahat2012effective}. Recently researchers have started taking fairness and discrimination into consideration in the design of MAB based personalized recommendation algorithms \cite{cseli2018algorithmic,liu2017calibrated,zhu2018fairness}. However, they focused on the fairness of the recommended items (e.g., services provided by small or large companies) instead of the customers who received those items. For example, \cite{liu2017calibrated}  focused on individual fairness, i.e., “treating similar individuals similarly,” and considered the individual as an arm with the aim of ensuring the probability of selecting an arm is equal to the probability with which the arm has the best quality realization. \cite{cseli2018algorithmic} aimed to achieve group fairness over items by ensuring the probability distribution from which items are sampled satisfies certain fairness constraints at all time steps. 
In this paper, we aim to develop novel algorithms to ensure fair and ethical treatment of customers with different profile attributes (e.g., gender, race, education, disability, and economic conditions) in a contextual bandit based personalized recommendation. 

\begin{table}[!h]
\centering
\caption{Illustrative example}
\begin{subtable}[h]{0.5\textwidth}
\centering
\begin{tabular}{c|c|c|c|c} 
    \hline
    Student & Gender& Grade  & GPA &...  \\
    \hline
Alice & female & 9th  & 2.6  & ... \\
    \hline
Bob &  male & 9th  &  2.6   &...\\
    \hline
... &  ... & ...  &  ...   &...\\
    \hline
\end{tabular}
\caption{Students}
\label{ex_table_stu}
\end{subtable}
\begin{subtable}[h]{0.5\textwidth}
\centering
\begin{tabular}{c|c|c|c|c}
    \hline
    Video & Gender of speaker & rating & length &...\\
    \hline
2501 &  female & 4.3 &  4 minutes &...\\
    \hline
0964  & male &  4.3 &  6 minutes  &...\\
    \hline
 ... & ... &  ... &  ... &...\\
    \hline  
\end{tabular}
\caption{Videos}
\label{ex_table_vid}
\end{subtable}
\begin{subtable}[h]{0.5\textwidth}
\centering
\begin{tabular}{c|c|c}
    \hline
    Student & Video & Reward\\
    \hline
Alice & 2501 & 0.60\\
    \hline
Bob & 0964& 0.80 \\
    \hline
... & ...& ... \\
    \hline
\end{tabular}
\caption{Recommendations}
\label{ex_table_rec}
\end{subtable}
\label{prob_example}
\end{table}

Consider the personalized educational video recommendation in Table \ref{ex_table_rec} as an illustrative example.  Table \ref{ex_table_stu} shows two students, Alice and Bob, having the same profiles except for the gender. Table \ref{ex_table_vid} shows potential videos and Table \ref{ex_table_rec} shows recommendations by a personalized recommendation algorithm. 
Focusing on the fairness of the video would ensure that videos featuring female speakers have similar chances of being recommended as those featuring male speakers. However, one group of students could benefit more from the recommended videos than the other group, therefore yielding to an unequal improvement of the learning performances. In our work, rather than focusing on the fairness of the item being recommended, i.e., the video, we focus on the user-side fairness in terms  of the reward, i.e., the improvement of student's learning performance after watching the recommended video. We want to ensure that both male students and female students who share similar profiles will receive a similar reward regardless of the video being recommended, such that they both benefit from the video recommendations and improve their learning performance equally.

We study how to achieve the user-side fairness in the classic contextual bandit algorithm. The contextual bandit framework \cite{langford2007epoch}, which is used to sequentially recommend items to a customer based on her contextual information, is able to fit user preferences, address the cold-start problem by balancing the exploration and exploitation trade-off in recommendation systems, and  simultaneously adapt the recommendation strategy based on feedback to maximize the customer's learning performance. However, such a personalized recommendation system could induce an unfair treatment of certain customers which could lead to discrimination. We develop a novel fairness aware contextual bandit algorithm such that customers will be treated fairly in personalized learning. 

We train our fair contextual bandit algorithm to detect discrimination, that is, whether or not a group of customers is being privileged in terms of reward received. Our fair contextual bandit algorithm then measures to what degree each of the items (arms in bandits) contributes to the discrimination. Furthermore, in order to counter the discrimination, if any, we introduce a fairness penalty factor. The goal of this penalty factor is to maintain a balance between fairness and utility, by ensuring that the arm picking strategy will not incur discrimination whilst achieving good utility. Finally, we compare our algorithm against the traditional LinUCB both theoretically and empirically and we show that our approach not only achieves group-level fairness in terms of reward, but also yields comparable effectiveness. 

Overall, our contributions are two-fold. First, we develop a fairness aware contextual bandit algorithm that achieves user-side fairness in terms of reward and is robust against factors that would otherwise increase or incur discrimination. Secondly, we provide a theoretical regret analysis to show that our algorithm has a regret bound higher than LinUCB up to only an additive constant. 

\section{Related Work}
\subsection{Bandits based Recommendation}
Many bandits based algorithms have been developed to suggest recommendations for products and services. Contextual bandit \cite{langford2007epoch} is an extension of the classic multi-armed bandit (MAB) algorithm \cite{katehakis1987multi}. The MAB chooses an action from a fixed set of choices to maximize the expected gain where each choice’s properties are only partially known at the time of choice and the gain of a choice will be observed only after the action is taken. In other words, the MAB simultaneously attempts to acquire new information (exploration) and optimize decisions based on existing knowledge (exploitation). Compared to the traditional content-based recommendation approaches, the MAB is able to fit dynamic-changed user preferences over time and address the cold-start problem by balancing the exploration and exploitation trade-off in the recommendation system. However, the MAB  does not use any information about the state of the environment. The contextual bandit model extends the MAB model by making the recommendation conditional on the state of the environment. Other variations  include stochastic \cite{abbasi2011improved}, Bayesian \cite{DBLP:conf/atal/GhalmeJGN17}, adversarial \cite{DBLP:conf/icml/SyrgkanisKS16}, and non-stationary \cite{DBLP:conf/nips/GurZB14} bandits. In this paper, we focus on the contextual bandit model because it is posed to help identify which items work for whom. 
The contextual information is the customer's features and the features of the items under exploration, and the reward is derived from purchase record or customer feedback.

\subsection{Fairness-aware Machine Learning}
Fairness aware machine learning is receiving increased attention. Discrimination is unfair treatment towards individuals based on the group to which they are perceived to belong. In machine learning, training data may have historically biased decisions against the protected group; models learned from such data may make discriminatory predictions against the protected group. The fair learning research community has developed extensive fair machine learning algorithms based on a variety of fairness metrics, e.g., equality of opportunity and equalized odds \cite{hardt2016equality,zafar2017fairness}, direct and indirect discrimination \cite{zhang2017causal,zhang2018fairness, chiappa2018path}, counterfactual fairness \cite{kusner2017counterfactual,russell2017worlds,wu2019counterfactual}, and path-specific counterfactual fairness \cite{wu2019pcfairness}.

Recently researchers have started taking fairness and discrimination into consideration in the design of MAB based personalized recommendation algorithms \cite{DBLP:conf/nips/JosephKMR16,DBLP:conf/aies/JosephKMNR18,DBLP:conf/icml/JabbariJKMR17,cseli2018algorithmic,liu2017calibrated,zhu2018fairness, burke2017multisided, burke2018balanced, ekstrand2018exploring}.
Among them, \cite{DBLP:conf/nips/JosephKMR16} was the first paper of studying fairness in classic and contextual bandits. It defined fairness with respect to one-step rewards introduced a notion of meritocratic fairness, i.e., the algorithm should never place higher selection probability on a less qualified arm (e.g., job applicant) than on a more qualified arm. This was inspired by equal treatment, i.e., similar people should be treated similarly. The following works along this direction include \cite{DBLP:conf/aies/JosephKMNR18} for infinite and contextual bandits, \cite{DBLP:conf/icml/JabbariJKMR17} for reinforcement learning,  \cite{liu2017calibrated} for the simple stochastic bandit setting with calibration based fairness.  In \cite{DBLP:journals/corr/abs-1907-10516}, the authors studied the problem of learning fair stochastic multi-armed bandit where each arm is required to be pulled for at least a given fraction of the total available rounds. In \cite{DBLP:conf/infocom/LiLJ19}, the authors studied fairness in the setting that multiple arms can be simultaneously played and an arm could sometimes be sleeping. 
\cite{DBLP:conf/nips/GillenJKR18} used an unknown Mahalanobis similarity metric from some weak feedback that identifies fairness violations through an oracle rather than adopting a quantitative fairness metric over individuals. The fairness constraint requires that the difference between the probabilities that any two actions are taken is bounded by the distance between their contexts. All the above papers require some fairness constraint on arms at every round of the learning process, which is different from our user-side fairness setting. 
How to achieve fairness in other related contexts have also been studied, e.g.,  sequential decision making \cite{DBLP:conf/ijcai/HeidariK18}, online stochastic classification \cite{DBLP:journals/corr/abs-1908-07009},  offline contextual bandits \cite{DBLP:conf/nips/MetevierGBKBBT19}, and collaborative filtering based recommendation systems \cite{DBLP:conf/nips/YaoH17,DBLP:conf/fat/EkstrandTAEAMP18}.

\section{Preliminary}\label{ch3}

Throughout this paper, we use bold letters to denote a vector.  We use $||\mathbf{x}||_2$ to define the L-2 norm of a vector $\mathbf{x} \in \mathbb{R}^d$. For a positive definite matrix $A \in \mathbb{R}^{d \times d}$, we define the weighted 2-norm of $\mathbf{x} \in \mathbb{R}^d$ to be $||\mathbf{x}||_A = \sqrt{\mathbf{x}^\mathrm{T}A\mathbf{x}}$.

\subsection{LinUCB Algorithm}
We use the linear contextual bandit \cite{chu2011contextual} as one baseline model for our personalized recommendation. In the linear contextual bandit, the reward for each action is an unknown linear function of the contexts. 
Formally, we model the personalized  recommendation as a contextual multi-armed bandit problem, where each user $u$ is a “bandit player”, each potential item $a \in \mathcal{A}$ is an arm and $k$ is the number of item candidates. 
At time $t$, there is a coming user $u$. For each item $a \in \mathcal{A}$, its contextual feature vector $\mathbf{x}_{t,a} \in \mathbb{R}^d$  represents the concatenation of the user and the item feature vectors. The algorithm takes all contextual feature vectors as input, recommends an item $a_t \in \mathcal{A}$ and observes the reward $r_{t,a_t}$, and then updates its item recommendation strategy with the new observation $(\mathbf{x}_{t,a_t}, a_t,r_{t,a_t})$.
During the learning process, the algorithm does not observe the reward information for unchosen items.

The total reward by round $t$ is defined as $\sum_t r_{t,a_t}$ and the optimal expected reward as $\mathbb{E}[\sum_t r_{t,a^*}]$, where $a^*$ indicates the best item that can achieve the maximum reward at time $t$. We aim to train an algorithm so that the maximum total reward can be achieved. Equivalently, the algorithm aims to minimize the regret $R(T)=\mathbb{E}[\sum_t r_{t,a^*}]-\mathbb{E}[\sum_t r_{t,a_t}]$. 
The contextual bandit algorithm balances exploration and exploitation to minimize regret since there is always uncertainty about the user’s reward given the specific item. 
 
\begin{algorithm}[h]
	\caption{LinUCB}
	\begin{algorithmic}[1]
		\STATE \text{Input:} $\alpha$ $\in$ $\mathbb{R}^+$
	    \FOR {t = 1,2,3,...,$T$} 
	    \STATE Observe contextual features of all arms $a \in \mathcal{A}_t : \mathbf{x}_{t,a} \in \mathbb{R}^d $ 
	    \FOR {$a \in \mathcal{A}_t$ }
	    \IF  {$a$ is new}
    	\STATE $A_a \leftarrow \mathbf{I_d}$ (d-Dimension identity matrix)
    	\STATE $\mathbf{b}_a \leftarrow \mathbf{0}_{d \times 1}$ (d-Dimension zero vector)
    	\ENDIF
	    \smallskip
	    \STATE $\bm{\hat{\theta}}_a \leftarrow A^{-1}_a \mathbf{b}_a$
	    \STATE $p_{t,a} \leftarrow \hat{\bm{\theta}}_a^\mathrm{T}\mathbf{x}_{t,a} + \alpha \sqrt{\mathbf{x}_{t,a}^\mathrm{T} A_a^{-1} \mathbf{x}_{t,a}} $
	    \ENDFOR
	    \STATE Choose arm $a_t = argmax_{a \in \mathcal{A}_t} p_{t,a}$ with ties broken arbitrarily, and observe a real-valued payoff $r_{t,a_t} $
	    \STATE $A_{a_t} \leftarrow A_{a_t} +\mathbf{x}_{t,a_t}\mathbf{x}^\mathrm{T}_{t,a_t} $
	    \STATE $\mathbf{b}_{a_t} \leftarrow \mathbf{b}_{a_t} + r_{t,a_t}\mathbf{x}_{t,a_t} $
	    \ENDFOR
	\end{algorithmic}
	\label{alg:linucb}
\end{algorithm}

We adopt the idea of upper confidence bound (UCB) for our personalized recommendation. Algorithm 1 shows the LinUCB algorithm as introduced by \cite{li2010contextual}. It assumes the expected reward   is linear in its $d$-dimensional features $\mathbf{x}_{t,a}$ with some unknown coefficient vector $\bm{\theta}^*_a$. Formally, for all $t$, we have the expected reward at time $t$ with arm $a$ as $\mathbb{E}[r_{t,a} |\mathbf{x}_{t,a}]= \bm{\theta}_a^{*\mathrm{T}}\mathbf{x}_{t,a}$.
Here the dot product of $\bm{\theta}^*_a$ and $\mathbf{x}_{t,a}$ could also be succinctly expressed as $\langle \bm{\theta}^*_a,\mathbf{x}_{t,a} \rangle$. At each round $t$, we observe the realized reward $r_{t,a} = \langle \bm{\theta}^*_a,\mathbf{x}_{t,a} \rangle + \epsilon_t$
where $\epsilon_t$ is the noise term.

Basically, LinUCB applies ridge regression technique to estimate the true coefficients. Let $D_a \in \mathbb{R}^{m_a \times d}$ denote the context of the historical observations when arm $a$ is selected and $\mathbf{r}_a \in \mathbb{R}^{m_a}$ denote the relative rewards. 
The regularised least-square estimator for $\bm{\theta}_a$ could be expressed as:
\begin{equation}
\bm{\hat{\theta}}_a = \mathop{\arg\min}_{\bm{\theta} \in \mathbb{R}^d} \left(  \sum_{i=1}^{m_a}(r_{i,a} - \langle \bm{\theta},D_a{(i,:)}\rangle)^2 + \lambda ||\bm{\theta}||^2_2  \right)
\label{ridge}
\end{equation}
where $\lambda$ is the penalty factor of the ridge regression. The solution to Equation $\ref{ridge}$ is: 
\begin{equation}
    \bm{\hat{\theta}}_a = (D_a^\mathrm{T}D_a +\lambda I_d)^{-1} D_a^\mathrm{T} \mathbf{r}_a
\end{equation}

\cite{li2010contextual} derived a confidence interval that contains the true expected reward with probability at least $1-\delta$:
\[ \left| \hat{\bm{\theta}}_a^\mathrm{T}\mathbf{x}_{t,a} - \mathbb{E}[r_{t,a}|\mathbf{x}_{t,a}] \right| \leq  \alpha\sqrt{ \mathbf{x}_{t,a}^\mathrm{T} (D_a^\mathrm{T}D_a +\lambda I_d) \mathbf{x}_{t,a}  }\]
for any $\delta > 0 $, where $\alpha = 1 + \sqrt{ln(2/\delta)/2}$ . Following the rule of optimism in the face of uncertainty for linear bandits (OFUL), this confidence bound leads to a reasonable arm-selection strategy: at each round $t$, pick an arm by
\begin{equation}
a_t = argmax_{a \in \mathcal{A}_t} \left( \bm{\hat{\theta}}_a^\mathrm{T}\mathbf{x}_{t,a} + \alpha \sqrt{\mathbf{x}_{t,a}^\mathrm{T} A_a^{-1} \mathbf{x}_{t,a}}    \right)
\end{equation}
where $A_a = D_a^\mathrm{T} D_a +  \lambda I_d$. The parameter $\lambda$ could be tuned to a suitable value in order to improve the algorithm's performance. Line 13 and 14 in Algorithm \ref{alg:linucb} provide an iterative way to update the arm-related matrices $A_a$ and $b_a$. In the remaining content we will denote the weighted 2-norm $\sqrt{\mathbf{x}_{t,a}^\mathrm{T} A_a^{-1} \mathbf{x}_{t,a}} $ as $|| \mathbf{x}_{t,a}||_{A_a^{-1}}$ for the sake of simplicity.

\subsection{Regret Bound of LinUCB}
\label{sec:regret_bound}

Existing research works (e.g., \cite{abbasi2011improved,wu2016contextual}) on deriving the regret bound of LinUCB are based on the following four assumptions:
\begin{enumerate}
    \item The true coefficient $\bm{\theta}^*$ is shared by all arms.
    \item The error term $\epsilon_t$ follows 1-sub-Gaussian distribution for each time point. 
    \item$\{\alpha_t\}^n_{i=1} $ is a non-decreasing sequence with $\alpha_1 \geq 1$.
    \item $||\mathbf{x}_{t,a}||_2 < L $,  $||\bm{\theta}^*||_2 < M $ for all time points and arms.
\end{enumerate}
For assumption 1, since there is only one unified $\bm{\theta}$, we change the notation of $D_a$, $\mathbf{r}_a$ to $D_t$ and $\mathbf{r}_t$ to denote the historical observations up to time $t$ for all arms. The matrix $A_a$ will be denoted as $A_t$ accordingly. For assumption 3, following \cite{abbasi2011improved} and \cite{wu2016contextual}, we modify $\alpha$ in Algorithm $\ref{alg:linucb}$ to be a time dependent sequence to get a suitable confidence set for $\bm{\theta}^*$ at each round, but use a fixed and tuned $\alpha$ in the experiment part to make the online computation more efficient.

To derive the regret bound, the first step is to construct a confidence set $\mathcal{C}_t \in \mathbb{R}^d $ for the true coefficient. At each round $t$, a natural choice is to make $\mathcal{C}_t$ centered at $\bm{\hat{\theta}}_{t-1}$. \cite{abbasi2011improved} shows that the confidence ellipsoid could be a suitable choice for constructing the confidence region, which is defined as follows:
\[\mathcal{C}_t  = \{ \bm{\theta} \in \mathbb{R}^d : ||\bm{\theta} - \bm{\hat{\theta}}_{t-1} ||_{A_{t-1}} < \alpha_t \} \]

The key point is how to obtain an appropriate $\alpha_t$ at each round to make $\mathcal{C}_t$ contain the true parameter $\bm{\theta}^*$ with high probability and be as small as possible simultaneously. \cite{abbasi2011improved} takes the advantages of the martingale techniques and derives a confidence bound in terms of the weighted 2-norm shown in Lemma \ref{abbasi}. 
\begin{lemma}
(Theorem 2 in \cite{abbasi2011improved}) Suppose the noise term is 1-sub-Gaussian distributed, let $\delta \in (0,1)$, with probability at least $1-\delta$, it holds that for all $t \in \mathbb{N^+}$,
\begin{equation}
\begin{split}
 ||\bm{\theta}^* - \bm{\hat{\theta}}_t ||_{A_t} \leq \sqrt{\lambda}||\bm{\theta}^*||_2 +  \sqrt{2log(|A_t|^{1/2}|\lambda I_d|^{-1/2}\delta^{-1}) }
\end{split}
\label{confidence_bound}
\end{equation}
\label{abbasi}
\end{lemma}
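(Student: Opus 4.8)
The plan is to follow the self-normalized martingale argument of \cite{abbasi2011improved}, which proceeds in two stages: first, algebraically isolate the estimation error into a deterministic regularization term and a stochastic noise term; second, control the noise term by a uniform-in-time concentration inequality obtained through the method of mixtures.

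First I would expand the ridge estimator. Writing $A_t = D_t^{\mathrm{T}}D_t + \lambda I_d$ and using $\mathbf{r}_t = D_t\bm{\theta}^* + \bm{\epsilon}_t$, where $\bm{\epsilon}_t$ stacks the per-round noise terms, a direct substitution into $\bm{\hat{\theta}}_t = A_t^{-1}D_t^{\mathrm{T}}\mathbf{r}_t$ yields the decomposition
\[
\bm{\hat{\theta}}_t - \bm{\theta}^* = A_t^{-1}D_t^{\mathrm{T}}\bm{\epsilon}_t - \lambda A_t^{-1}\bm{\theta}^*.
\]
Taking the $A_t$-weighted norm and applying the triangle inequality splits the bound into a stochastic piece $||A_t^{-1}D_t^{\mathrm{T}}\bm{\epsilon}_t||_{A_t}$ and a deterministic piece $\lambda\,||A_t^{-1}\bm{\theta}^*||_{A_t}$. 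The deterministic piece is routine: since $||A_t^{-1}\mathbf{v}||_{A_t} = ||\mathbf{v}||_{A_t^{-1}}$, and $A_t \succeq \lambda I_d$ forces $A_t^{-1} \preceq \lambda^{-1}I_d$, it collapses to $\lambda \cdot \lambda^{-1/2}||\bm{\theta}^*||_2 = \sqrt{\lambda}\,||\bm{\theta}^*||_2$, which is exactly the first term on the right-hand side of the statement.

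The crux is bounding the stochastic piece $||A_t^{-1}D_t^{\mathrm{T}}\bm{\epsilon}_t||_{A_t} = ||\mathbf{S}_t||_{A_t^{-1}}$, where $\mathbf{S}_t = \sum_{s=1}^{t} \epsilon_s \mathbf{x}_s$ is the accumulated noise. Here I would invoke the self-normalized bound for vector-valued martingales. For a fixed direction $\bm{\eta}$, the process $M_t^{\bm{\eta}} = \exp\!\bigl(\langle \bm{\eta}, \mathbf{S}_t\rangle - \tfrac12 ||\bm{\eta}||^2_{D_t^{\mathrm{T}}D_t}\bigr)$ is a nonnegative supermartingale with $\mathbb{E}[M_t^{\bm{\eta}}] \le 1$, a consequence of the conditional $1$-sub-Gaussianity of each $\epsilon_s$ applied one increment at a time. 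I would then mix over $\bm{\eta}$ against a $\mathcal{N}(\mathbf{0},\lambda^{-1}I_d)$ prior; the resulting average $\bar{M}_t = \int M_t^{\bm{\eta}}\,dp(\bm{\eta})$ is still a nonnegative supermartingale with $\mathbb{E}[\bar{M}_t]\le 1$, and a Gaussian integral evaluates it in closed form as $\bar{M}_t = \bigl(|\lambda I_d|/|A_t|\bigr)^{1/2}\exp\!\bigl(\tfrac12||\mathbf{S}_t||^2_{A_t^{-1}}\bigr)$.

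Finally I would apply Ville's maximal inequality to $\bar{M}_t$, which gives $\Pr(\exists t:\ \bar{M}_t \ge 1/\delta) \le \delta$. On the complementary event, solving $\bar{M}_t < 1/\delta$ for $||\mathbf{S}_t||_{A_t^{-1}}$ yields $||\mathbf{S}_t||_{A_t^{-1}} \le \sqrt{2\log(|A_t|^{1/2}|\lambda I_d|^{-1/2}\delta^{-1})}$ simultaneously for all $t$, which is the second term in the statement; adding the deterministic piece completes the proof. I expect the main obstacle to be this method-of-mixtures step, namely verifying the supermartingale property from the sub-Gaussian assumption and carrying out the Gaussian integration to produce the determinant ratio $|A_t|^{1/2}|\lambda I_d|^{-1/2}$, since the algebraic decomposition and the deterministic bound are comparatively mechanical.
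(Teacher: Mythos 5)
Your proposal is correct: it faithfully reconstructs the self-normalized martingale / method-of-mixtures proof of Theorem 2 in Abbasi-Yadkori et al.\ (2011), which is exactly the result this lemma imports --- the paper itself gives no proof, citing that theorem directly. The error decomposition $\bm{\hat{\theta}}_t - \bm{\theta}^* = A_t^{-1}D_t^{\mathrm{T}}\bm{\epsilon}_t - \lambda A_t^{-1}\bm{\theta}^*$, the bound $\lambda ||A_t^{-1}\bm{\theta}^*||_{A_t} \leq \sqrt{\lambda}\,||\bm{\theta}^*||_2$, the Gaussian-mixture integral yielding the determinant ratio $|A_t|^{1/2}|\lambda I_d|^{-1/2}$, and the appeal to Ville's maximal inequality for uniformity in $t$ all match the original source's argument.
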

The RHS of Equation \ref{confidence_bound} gives an appropriate selection of $\alpha_t$ for the confidence ellipsoid. 
Under the fact that $\theta ^* \in \mathcal{C}_t$ and the optimistic arm selection rule of LinUCB we could further bound the regret at each round with high probability by 
$r_t =  \langle \bm{\theta}^*,\mathbf{x}_{t,a} \rangle - \langle \bm{\hat{\theta}},\mathbf{x}_{t,a}\rangle \leq 2 \alpha_t ||\mathbf{x}_{t,a} ||_{A^{-1}_t}$. Summing up the regret at each round, the following corollary gives a $\tilde{\mathcal{O}}(dlog(T))$ cumulative regret bound up to time $T$.

\begin{corollary}
(Corollary 19.3 in \cite{lattimore2018bandit}) Under the assumptions above, the expected regret of LinUCB with $\delta = 1/T $ is bounded by 
\begin{equation}
\begin{split}
    R_T \leq Cd\sqrt{T}log(TL)
\end{split}
\end{equation}
where $C$ is a suitably large constant.
\label{simple_bound}
\end{corollary}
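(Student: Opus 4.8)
The plan is to start from the high-probability per-round regret bound stated just above the corollary, namely $r_t \leq 2\alpha_t ||\mathbf{x}_{t,a_t}||_{A_{t-1}^{-1}}$, and sum it over the $T$ rounds. Since this per-round inequality holds only on the event $\{\bm{\theta}^* \in \mathcal{C}_t\}$, I would first invoke Lemma \ref{abbasi} with $\delta = 1/T$ so that the event holds simultaneously for all $t$ with probability at least $1 - 1/T$; on the complementary event the regret is controlled crudely by a constant times $T$ (using $||\mathbf{x}_{t,a}||_2 < L$ and $||\bm{\theta}^*||_2 < M$ from Assumption 4), contributing only an $\mathcal{O}(1)$ term to the expectation after multiplication by $1/T$. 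This reduces the problem to bounding $\sum_{t=1}^{T} 2\alpha_t ||\mathbf{x}_{t,a_t}||_{A_{t-1}^{-1}}$.

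Because $\{\alpha_t\}$ is non-decreasing (Assumption 3), I would replace each $\alpha_t$ by $\alpha_T$ and extract it from the sum, then apply Cauchy--Schwarz to obtain $\sum_{t=1}^{T} ||\mathbf{x}_{t,a_t}||_{A_{t-1}^{-1}} \leq \sqrt{T \sum_{t=1}^{T} ||\mathbf{x}_{t,a_t}||_{A_{t-1}^{-1}}^{2}}$. The core of the argument is the elliptical-potential (log-determinant) lemma, which bounds $\sum_{t=1}^{T} ||\mathbf{x}_{t,a_t}||_{A_{t-1}^{-1}}^{2} \leq 2\log\!\big(|A_T| / |\lambda I_d|\big)$; this step converts the accumulated predictive uncertainty into a single log-determinant via the rank-one update $A_t = A_{t-1} + \mathbf{x}_{t,a_t}\mathbf{x}_{t,a_t}^{\mathrm{T}}$. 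To make it directly applicable I would use $\lambda \geq 1$ together with $||\mathbf{x}_{t,a_t}||_2 < L$ to ensure each term satisfies $||\mathbf{x}_{t,a_t}||_{A_{t-1}^{-1}}^{2} \leq 1$, so the $\min(1,\cdot)$ form of the potential lemma applies verbatim.

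It then remains to bound the two determinant-dependent quantities. For the log-determinant I use the trace--AM-GM inequality: since $A_T = \lambda I_d + \sum_t \mathbf{x}_{t,a_t}\mathbf{x}_{t,a_t}^{\mathrm{T}}$ has trace at most $d\lambda + T L^2$, the eigenvalue product gives $|A_T| \leq \big((\lambda d + T L^2)/d\big)^{d}$, hence $\log\!\big(|A_T|/\lambda^{d}\big) = \mathcal{O}(d\log(TL))$. Substituting this same determinant estimate into the RHS of Lemma \ref{abbasi} with $\delta = 1/T$ yields $\alpha_T = \sqrt{\lambda}\,M + \sqrt{2\log\!\big(|A_T|^{1/2}\lambda^{-d/2} T\big)} = \mathcal{O}(\sqrt{d\log(TL)})$. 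Multiplying the factors $2\alpha_T \cdot \sqrt{T} \cdot \sqrt{2\log(|A_T|/\lambda^{d})}$ gives $\mathcal{O}\!\big(\sqrt{d\log(TL)} \cdot \sqrt{T} \cdot \sqrt{d\log(TL)}\big) = \mathcal{O}(d\sqrt{T}\log(TL))$, and absorbing all constants into a single $C$ yields the claimed bound.

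The main obstacle, and the only step that is not a routine inequality, is establishing the elliptical-potential lemma $\sum_{t=1}^{T} ||\mathbf{x}_{t,a_t}||_{A_{t-1}^{-1}}^{2} \leq 2\log\!\big(|A_T|/|\lambda I_d|\big)$. This rests on the matrix-determinant identity $|A_t| = |A_{t-1}|\big(1 + ||\mathbf{x}_{t,a_t}||_{A_{t-1}^{-1}}^{2}\big)$ together with the elementary bound $u \leq 2\log(1+u)$ valid for $u \in [0,1]$. The subtle point is that the latter inequality fails for large $u$, so the uniform bound $||\mathbf{x}_{t,a_t}||_{A_{t-1}^{-1}}^{2} \leq 1$ must be secured first; this is precisely why fixing $\lambda \geq 1$ at the outset is essential to the argument.
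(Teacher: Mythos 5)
Your proposal is correct and takes essentially the same route the paper does: the paper itself imports this corollary from Lattimore--Szepesv\'{a}ri without proof, but its own proof of Theorem \ref{theorem1} (and Corollary \ref{our_simplified_bound}) follows precisely your template---per-round bound from the confidence ellipsoid plus the optimistic selection rule, Cauchy--Schwarz, the elliptical-potential bound of Lemma \ref{zhou_lemma1}, the determinant/trace bound of Lemma \ref{zhou_lemma2}, and substitution of $\alpha_T$ from Lemma \ref{alpha_T} with $\delta = 1/T$---while your explicit handling of the low-probability bad event (crude $O(T)$ regret times probability $1/T$) is a detail the paper glosses over. The one slip is your claim that $\lambda \geq 1$ together with $||\mathbf{x}_{t,a_t}||_2 < L$ guarantees $||\mathbf{x}_{t,a_t}||^2_{A_{t-1}^{-1}} \leq 1$: that quantity is only bounded by $L^2/\lambda$, so you actually need $\lambda \geq \max(1,L^2)$, which is exactly the hypothesis of Lemma \ref{zhou_lemma1} (Lemma 11 of \cite{abbasi2011improved}); with that correction your argument goes through verbatim.
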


\section{Fairness Aware Contextual Bandits}
We focus on how to achieve user-side fairness in contextual bandit based recommendation and  present our fair contextual bandit algorithm, called Fair-LinUCB and derive its regret bound.

\subsection{Problem Formulation} \label{prob}

We define a sensitive attribute $S \in \mathbf{x}_{t,a}$ with domain values $\{s^+ , s^-\}$ where $s^+$ ($s^-$) is the value of the privileged (protected) group. Let $T_s$ denote a time index subset such that the users being treated at time points in $T_s$ all hold the same sensitive attribute value $s$.  
 We introduce the group-level cumulative mean reward as  $\bar{r}^{s} =\dfrac{1}{|T_s|}\sum_{t \in T_s} r_{t,a}$. 
Specifically, $\bar{r}^{s^+}$ denotes the cumulative mean reward of the individuals with sensitive attribute $S=s^+$, and  $\bar{r}^{s^-}$ denotes the cumulative mean reward of all individuals having the sensitive attribute $S=s^-$.

We define the group fairness in contextual bandits as $\mathbb{E}[\bar{r}^{s^+}]=\mathbb{E}[\bar{r}^{s^{-}}]$, more specifically, the expected mean reward of the protected group and that of the unprotected group should be equal. 
A recommendation algorithm incurs group-level unfairness in regards to a sensitive attribute $S$ if $|\mathbb{E}[\bar{r}^{s^+}]-\mathbb{E}[\bar{r}^{s^-}]|>\tau$ where $\tau \in \mathbb{R^+}$ reflects the tolerance degree of unfairness. 

\subsection{Fair-LinUCB algorithm} \label{fairnessCB}
We describe our fair LinUCB algorithm and show its pseudo code in Algorithm \ref{alg:fairucb}. The key difference from the traditional LinUCB is the strategy of choosing an arm during recommendation (shown in Line 12 of Algorithm  \ref{alg:fairucb}). In the remaining of this section, we explain how this new strategy achieves user-side group-level fairness.  

\begin{algorithm}
	\caption{Fair-LinUCB}
	\begin{algorithmic}[1]
	    \STATE \text{Input:} $\alpha$ , $\gamma$ $\in$ $\mathbb{R^+}$
	    \STATE $\bar{r}^{s^+}, \bar{r}^{s^-} \leftarrow 0 $
	    \FOR {t = 1,2,3,..., T} 
	    \STATE Observe features of all arms $a \in \mathcal{A}_t : \mathbf{x}_{t,a} \in \mathbb{R}^d $ 
	    \FOR {$a \in \mathcal{A}_t$ }
	    \IF{$a$ is new}
	    \STATE $A_a \leftarrow \lambda \mathbf{I}_d$ (d-Dimension identity matrix)
	    \STATE $\mathbf{b}_a \leftarrow \mathbf{0}_{d \times 1}$ (d-Dimension zero vector)
        \STATE $\bar{r}_a^{s^+}, \bar{r}_a^{s^-} \leftarrow 0 $
	    \ENDIF
	    \STATE $\bm{\hat{\theta}}_a \leftarrow A_a^{-1}\mathbf{b}_a   $
	    \STATE $p_{t,a} \leftarrow \bm{\hat{\theta}}_a^\mathrm{T}\mathbf{x}_{t,a} + \alpha || \mathbf{x}_{t,a}||_{A_a^{-1}} + \mathcal{L}(\gamma, F_a)$
	    \smallskip
	    \ENDFOR
	    \STATE Choose arm $a_t = argmax_{a \in \mathcal{A}_t} p_{t,a}$ with ties broken arbitrarily, and observe a real-valued payoff $r_{t,a_t} $
	    \STATE $ A_a \leftarrow A_a +\mathbf{x}_{t,a_t}\mathbf{x}^\mathrm{T}_{t,a_t} $
	    \STATE $\mathbf{b}_a \leftarrow \mathbf{b}_a + r_{t,a_t}\mathbf{x}_{t,a_t} $
        \IF{$S_t = s^+$}
        \STATE update $\bar{r}^{s^+},\bar{r}_a^{s^+}$ with $r_{t,a_t}$
	    \ELSE  
	    \STATE update $\bar{r}^{s^-},\bar{r}_a^{s^-}$ with $r_{t,a_t}$
        \ENDIF
	    \ENDFOR
	\end{algorithmic}
	\label{alg:fairucb}
\end{algorithm}

Given a sensitive attribute $S$ with domain values $\{s^+, s^-\}$, the goal of our fair contextual bandit is to minimize the cumulative mean reward difference between the protected group and the privileged group while preserving its efficiency. Note that Fair-LinUCB can be extended to the general setting of multiple sensitive attributes $S_j \in \bm{S} = \{ S_1,S_2, ... , S_l \}$ where $\bm{S} \subset \mathbf{x}_{t,a}$ and each $S_j$ can have multiple domain values. In order to measure the unfairness at the group-level, our Fair-LinUCB algorithm will keep track of both cumulative mean rewards along the time, e.g., $\bar{r}^{s^+}$ and $\bar{r}^{s^-}$. We capture the orientation of the bias (i.e., towards which group the bias is leaning) through the sign of the cumulative mean reward difference.
By doing so, Fair-LinUCB is able to know which group is being discriminated and which group is being privileged. 

When running context bandits for recommendation, each arm may generate a reward discrepancy and therefore contribute to the unfairness to some degree. Fair-LinUCB captures the reward discrepancy at the arm level by keeping track of the cumulative mean reward generated by each arm $a$ for both groups $s^+$ and $s^-$. Specifically, let $\bar{r}^{s^+}_a$ denote the average of the rewards generated by arm $a$ for the group $s^+$, and let $\bar{r}^{s^-}_a$ denote the average of the rewards generated by arm $a$ for the group $s^-$.
The bias of an arm is thus the difference of both averages: $\Delta_a = (\bar{r}^{s^+}_a - \bar{r}^{s^-}_a)$. Finally, by combining the direction of the bias and the amount of the bias induced by each arm $a$, we define the fairness penalty term as $F_a =  -sign(\bar{r}^{s^+} - \bar{r}^{s^-}) \cdot \Delta_a$, and exert onto the UCB value in our fair contextual bandit algorithm. Note that the lesser an arm contributes to the bias, the smaller the penalty. 

As a result, if an arm has a high UCB but incurs bias, its adjusted UCB value will decrease and it will be less likely to be picked by the algorithm. In contrast, if an arm has a small UCB but is fair, its adjusted UCB value will increase, and it will be more likely to be picked by the algorithm, thereby reducing the potential unfairness in recommendation. Different from  the traditional LinUCB that picks the arm to solely maximize the UCB, our Fair-LinUCB  accounts for the fairness of the arm and picks the arm that maximizes the summation of the UCB and the fairness. 
Formally, we show the modified arm selection criteria in Equation \ref{fairpta}.
\begin{equation}
p_{t,a} \leftarrow \bm{\hat{\theta}}_a^\mathrm{T}\mathbf{x}_{t,a} + \alpha || \mathbf{x}_{t,a}||_{A_a^{-1}} + \mathcal{L}(\gamma, F_a)
\label{fairpta}
\end{equation}
We adopt a linear mapping function $\mathcal{L}$ with input parameters $\gamma$  and $F_a$  to transform the fairness penalty term proportionally to the size of its confidence interval. Specifically, 
\begin{align}
\mathcal{L}(\gamma, F_a) &= \frac{\alpha_t ||\mathbf{x}_{t,a_m} ||_{A^{-1}_t}}{2}(F_a+1)\gamma\label{mapping_function}\\
        a_m &= argmin_{a \in \mathcal{A}_t} || \mathbf{x}_{t,a}||_{A_a^{-1}}
\end{align}

Assuming that the reward generated is in the range $[0, 1]$, the fairness penalty $F_a$ lies in $[-1, 1]$.  When designing the coefficient of the linear mapping function, we choose $a_m$ to be the arm with the smallest confidence interval to guarantee a unified fairness calibration among all the arms. Under the effect of $\mathcal{L}$, the range of the fairness penalty is mapped from $[-1, 1]$ to $[0,\
\gamma \alpha_t ||\mathbf{x}_{t,a_m} ||_{A^{-1}_t}]$, which implies a similar scale with the confidence interval. In our empirical evaluations, we show how $\gamma$ controls fairness-accuracy trade-off on the practical performance of Fair-LinUCB.

\subsection{Regret Analysis}
\label{regret_analysis}
In this section, We prove that our Fair-LinUCB algorithm has a $\tilde{\mathcal{O}}(dlog(T))$ regret bound under certain assumptions with carefully chosen parameters. We adopt the regret analysis framework of linear contextual bandit and introduce a mapping function on the fairness penalty term. By applying the mapping function $\mathcal{L}$ we make our fairness penalty term possess the similar scale with the half length of the confidence interval. Thus we can merge the regret generated by UCB term and fairness term together and derive our regret bound.

\begin{theorem}
\label{theorem1}
Under the same assumptions shown in Section \ref{sec:regret_bound}, further assuming $\gamma$ is a moderate small constant with $\gamma \leq \Gamma$, there exists  $\delta \in (0,1)$ such that with probability at least $1-\delta$ Fair-LinUCB  achieves the following regret bound:
\begin{equation}
\begin{split}
    R_T &\leq  \sqrt{2Tdlog(1+TL^2/(d\lambda))} ~~ \times \\ 
    &(2+\Gamma)(\sqrt{\lambda}M + \sqrt{2log(1/\delta) + dlog(1+T L^2/(d\lambda))}) 
\end{split}    
\end{equation}
\label{th:regret_bound}
\end{theorem}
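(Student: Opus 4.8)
The plan is to follow the confidence-ellipsoid regret analysis of LinUCB (Lemma \ref{abbasi} together with the elliptical-potential argument behind Corollary \ref{simple_bound}) and to show that the only effect of the fairness penalty $\mathcal{L}(\gamma, F_a)$ is to inflate the per-round confidence-width coefficient from $2$ to $2+\gamma$. Under assumption 1 all arms share a single estimator, so I write $A_{t-1}$ for the design matrix and $\bm{\hat{\theta}}_{t-1}$ for the ridge estimate available at round $t$, and I work on the high-probability event $E = \{\bm{\theta}^* \in \mathcal{C}_t \text{ for all } t\}$, which by Lemma \ref{abbasi} has probability at least $1-\delta$ once $\alpha_t$ is taken to be the right-hand side of Equation \ref{confidence_bound}.

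First I would bound the instantaneous regret $r_t = \langle \bm{\theta}^*, \mathbf{x}_{t,a^*}\rangle - \langle \bm{\theta}^*, \mathbf{x}_{t,a_t}\rangle$. On $E$, optimism gives $\langle \bm{\theta}^*, \mathbf{x}_{t,a^*}\rangle \leq \langle \bm{\hat{\theta}}_{t-1}, \mathbf{x}_{t,a^*}\rangle + \alpha_t \|\mathbf{x}_{t,a^*}\|_{A_{t-1}^{-1}}$, while the arm-selection rule $p_{t,a_t} \geq p_{t,a^*}$ lets me replace the $a^*$ UCB by the $a_t$ UCB at the cost of the fairness difference $\mathcal{L}(\gamma, F_{a_t}) - \mathcal{L}(\gamma, F_{a^*})$. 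Combining these with the weighted Cauchy--Schwarz bound $|\langle \bm{\hat{\theta}}_{t-1} - \bm{\theta}^*, \mathbf{x}_{t,a_t}\rangle| \leq \alpha_t \|\mathbf{x}_{t,a_t}\|_{A_{t-1}^{-1}}$ (again valid on $E$), I obtain
\[
r_t \leq 2\alpha_t \|\mathbf{x}_{t,a_t}\|_{A_{t-1}^{-1}} + \bigl(\mathcal{L}(\gamma, F_{a_t}) - \mathcal{L}(\gamma, F_{a^*})\bigr).
\]

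The crux --- and the step I expect to be the main obstacle --- is to absorb the fairness difference into the confidence width. Here the design of $\mathcal{L}$ pays off: since rewards lie in $[0,1]$ the penalties satisfy $F_a \in [-1,1]$, so $F_{a_t} - F_{a^*} \leq 2$, and because $\mathcal{L}$ is calibrated with the \emph{smallest} confidence interval $\|\mathbf{x}_{t,a_m}\|_{A_{t-1}^{-1}} = \min_{a \in \mathcal{A}_t} \|\mathbf{x}_{t,a}\|_{A_{t-1}^{-1}} \leq \|\mathbf{x}_{t,a_t}\|_{A_{t-1}^{-1}}$, the linear form in Equation \ref{mapping_function} gives $\mathcal{L}(\gamma, F_{a_t}) - \mathcal{L}(\gamma, F_{a^*}) \leq \gamma \alpha_t \|\mathbf{x}_{t,a_t}\|_{A_{t-1}^{-1}}$. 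Hence $r_t \leq (2+\gamma)\alpha_t \|\mathbf{x}_{t,a_t}\|_{A_{t-1}^{-1}} \leq (2+\Gamma)\alpha_t \|\mathbf{x}_{t,a_t}\|_{A_{t-1}^{-1}}$ using $\gamma \leq \Gamma$. Verifying that the $a_m$-calibration is exactly what forces the extra term to share the scale $\|\mathbf{x}_{t,a_t}\|_{A_{t-1}^{-1}}$, rather than some larger, arm-dependent width that cannot be telescoped, is the delicate point; without it the fairness term would not collapse into the elliptical-potential sum and the bound would degrade.

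Finally I would sum over $t$ and reduce to standard quantities. Since $\{\alpha_t\}$ is non-decreasing (assumption 3) I pull out $\alpha_T$, apply Cauchy--Schwarz across rounds, and invoke the standard elliptical-potential lemma $\sum_{t=1}^T \|\mathbf{x}_{t,a_t}\|_{A_{t-1}^{-1}}^2 \leq 2d\log(1 + TL^2/(d\lambda))$ to get $\sum_t \|\mathbf{x}_{t,a_t}\|_{A_{t-1}^{-1}} \leq \sqrt{2Td\log(1+TL^2/(d\lambda))}$. To control $\alpha_T$ I bound the log-determinant ratio in Equation \ref{confidence_bound} by the determinant--trace (AM--GM on eigenvalues) inequality, using $\|\mathbf{x}_{t,a}\|_2 < L$ to obtain $|A_T|/|\lambda I_d| \leq (1 + TL^2/(d\lambda))^d$ and $\|\bm{\theta}^*\|_2 < M$, which yields $\alpha_T \leq \sqrt{\lambda}M + \sqrt{2\log(1/\delta) + d\log(1 + TL^2/(d\lambda))}$. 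Multiplying the two factors and the coefficient $(2+\Gamma)$ reproduces the stated bound exactly, confirming the $\tilde{\mathcal{O}}(d\log T)$ rate with only the additive/multiplicative $\Gamma$ overhead relative to LinUCB.
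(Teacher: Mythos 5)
Your proposal is correct and follows essentially the same route as the paper's proof: optimism plus the Fair-LinUCB selection rule to bound the per-round regret, the $a_m$-calibration to absorb the fairness term into a $(2+\gamma)\alpha_t\|\mathbf{x}_{t,a_t}\|_{A_{t-1}^{-1}}$ width, then Cauchy--Schwarz across rounds, the elliptical-potential and determinant lemmas, and the standard choice of $\alpha_T$. The only cosmetic difference is that you bound the fairness difference $\mathcal{L}(\gamma,F_{a_t})-\mathcal{L}(\gamma,F_{a^*})$ directly, whereas the paper drops the nonnegative term $\mathcal{L}(\gamma,F_{a^*})$ and bounds $\mathcal{L}(\gamma,F_{a_t})\leq\mathcal{L}(\gamma,1)$; both yield exactly $\gamma\alpha_t\|\mathbf{x}_{t,a_m}\|_{A_{t-1}^{-1}}$.
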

\begin{proof}
We first introduce three technical lemmas from \cite{abbasi2011improved} and \cite{lattimore2018bandit} to help us complete the proof of Theorem \ref{th:regret_bound}.

\begin{lemma}
(Lemma 11 in appendix of \cite{abbasi2011improved}) If $\lambda \geq max(1,L^2)$, the weighted L2-norm of feature vector could be bounded by :
$\sum_{t=1}^{T}||\mathbf{x}_{t,a}||^2_{A^{-1}_t}  \leq 2 log \frac{|A_t|}{\lambda^d}$
%\label{feature_bound}
\label{zhou_lemma1}
\end{lemma} 

\begin{lemma}
(Lemma 10 in appendix of \cite{abbasi2011improved} ) The determinant $|A_t|$ could be bounded by:
$|A_t| \leq (\lambda + t L^2/d)^d$.
%\label{matrix_bound}
\label{zhou_lemma2}
\end{lemma}

\begin{lemma}
 (Theorem 20.5 in \cite{lattimore2018bandit}) With probability at least $1-\delta$, for all the time point $t \in \mathbb{N}^+$ the true coefficient $
\bm{\theta}^*$ lies in the set:
\begin{equation}
\begin{split}
\mathcal{C}_t  = \{ &\bm{\theta} \in \mathbb{R}^d :||\bm{\hat{\theta}}_t - \bm{\theta}||_{A_t} \leq \\
&\sqrt{\lambda}M + \sqrt{2log(1/\delta) + dlog(1+T L^2/(d\lambda))} \} 
\end{split}
\label{simplified_bound}
\end{equation}
\label{alpha_T}
\end{lemma}

In Fair-LinUCB, the range of fairness term is $[-
1,1]$, we apply a linear mapping function 
$\mathcal{L}(\gamma, x) = \frac{\alpha_t ||\mathbf{x}_{t,a_m} ||_{A^{-1}_t}}{2}(x+1)\gamma$
to map the range of $\mathcal{L}(\gamma, F_a)$ to $[0,\
\gamma \alpha_t ||\mathbf{x}_{t,a_m} ||_{A^{-1}_t}]$, where $a_m = argmin_{a \in \mathcal{A}_t} || \mathbf{x}_{t,a}||_{A_a^{-1}}  $.

According to the  rule, the regret at each time $t$ is bounded by: 
\begin{gather*}
\begin{split}
reg_t  
& = \mathbf{x}_{t,a}^\mathrm{T}\bm{\hat{\theta}}_t -
\mathbf{x}^\mathrm{T}_{t,a}\bm{\theta}^* \\
&\leq \mathbf{x}_{t,a}^\mathrm{T}\bm{\hat{\theta}}_t + \alpha_t||\mathbf{x}_{t,a}||_{A^{-1}_t} +   \mathcal{L}(\gamma, F_{a}) - \mathbf{x}^\mathrm{T}_{t,a}\bm{\theta}^*\\
&\leq \mathbf{x}_{t,a}^\mathrm{T}\bm{\hat{\theta}}_t + \alpha_t||\mathbf{x}_{t,a}||_{A^{-1}_t} +  \mathcal{L}(\gamma, F_{a})\\
&- (\mathbf{x}^\mathrm{T}_{t,a}\bm{\hat{\theta}_t} -\alpha_t||\mathbf{x}_{t,a}||_{A^{-1}_t})   \\
&\leq 2\alpha_t||\mathbf{x}_{t,a}||_{A^{-1}_t} + \mathcal{L}(\gamma,1) \\
& =  2\alpha_t||\mathbf{x}_{t,a}||_{A^{-1}_t} +  \gamma\alpha_t||\mathbf{x}_{t,a_m}||_{A^{-1}_t}\\
&\leq 2\alpha_t||\mathbf{x}_{t,a}||_{A^{-1}_t} + \gamma\alpha_t||\mathbf{x}_{t,a}||_{A^{-1}_t}\\
&\leq(2+\Gamma) \alpha_t||\mathbf{x}_{t,a}||_{A^{-1}_t}
\end{split}
\label{eq:r_t}
\end{gather*}

 The second line above is derived based on the theoretic result in Lemma \ref{abbasi} and following the selection rule of the Fair-LinUCB algorithm, specifically, $\mathbf{x}^\mathrm{T}_{t,a^*}\bm{\theta}^* \leq \mathbf{x}_{t,a^*}^\mathrm{T}\bm{\hat{\theta}}_t + \alpha_t||\mathbf{x}_{t,a^*}||_{A^{-1}_t} \leq \mathbf{x}_{t,a^*}^\mathrm{T}\bm{\hat{\theta}}_t + \alpha_t||\mathbf{x}_{t,a^*}||_{A^{-1}_t} + \mathcal{L}(\gamma, F_{a^*}) \leq \mathbf{x}_{t,a}^\mathrm{T}\bm{\hat{\theta}}_t + \alpha_t||\mathbf{x}_{t,a}||_{A^{-1}_t} + \mathcal{L}(\gamma, F_a)$.
Note that Lemma \ref{abbasi} can be equally applied here because the estimator $\hat{\theta}_t$ is still a valid ridge regression estimator at each round.

Summing up the regret at each bound, with probability at least $1-\delta$ the cumulative regret up to time $T$ is bounded by:
\begin{equation}
\begin{split}
R_T &= \sum_{t=1}^{T}reg_t \leq \sqrt{T\sum_{t=1}^{T}reg_t^2}\\
&\leq (2+\Gamma)\alpha_T\sqrt{T\sum_{t=1}^{T}||\mathbf{x}_{t,a}||^2_{A^{-1}_t}}  
\end{split}
\label{eq:R_t}
\end{equation}

Since $\{\alpha_t\}^n_{i=1}$ is a non-decreasing sequence, we can enlarge each element $\alpha_t$ to $\alpha_T$ to obtain the inequalities in Equation \ref{eq:R_t}.
By applying the inequalities from Lemma \ref{zhou_lemma1} and \ref{zhou_lemma2} we could further relax the regret bound up to time $T$ to:
\begin{equation}
\begin{split}
R_T &\leq (2+\Gamma)\alpha_T\sqrt{2Tlog\frac{|A_t|}{\lambda^d}}\\
&\leq (2+\Gamma)\alpha_T\sqrt{2Td(log(\lambda+TL^2/d)-log\lambda)}\\
&= (2+\Gamma)\alpha_T\sqrt{2Tdlog(1+TL^2/(d\lambda))}
\end{split}
\label{eq:R_t_further}
\end{equation}

Following the result of Lemma \ref{abbasi}, by loosing the determinant of $A_t$ according to Lemma \ref{zhou_lemma2},
Lemma \ref{alpha_T} provides a suitable choice for $\alpha_T$ up to time $T$. By plugging in the RHS from Equation \ref{simplified_bound} we get the regret bound shown in Theorem \ref{theorem1}:
\begin{gather*}
\begin{split}
    R_T &\leq  \sqrt{2Tdlog(1+TL^2/(d\lambda))} ~~ \times \\ 
    &(2+\Gamma)(\sqrt{\lambda}M + \sqrt{2log(1/\delta) + dlog(1+T L^2/(d\lambda))})
\end{split}
\end{gather*}
\end{proof}

\begin{corollary}
Setting $\delta = 1/T$, the regret bound in Theorem \ref{theorem1} could be simplified as $R_T \leq C' d\sqrt{T}log(TL)$.
\label{our_simplified_bound}
\end{corollary}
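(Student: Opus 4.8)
The plan is to start from the explicit bound in Theorem~\ref{theorem1}, substitute $\delta = 1/T$, and then collapse every problem-independent quantity into a single constant $C'$ while tracking only the dependence on $d$, $T$, and $L$. After the substitution $\log(1/\delta) = \log T$, the bound factors as a leading term $\sqrt{2Td\log(1+TL^2/(d\lambda))}$ multiplied by a trailing term $(2+\Gamma)\bigl(\sqrt{\lambda}M + \sqrt{2\log T + d\log(1+TL^2/(d\lambda))}\bigr)$, so the whole argument reduces to bounding each factor and multiplying.

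First I would control the recurring logarithm. Treating $\lambda$, $L$, $M$, and $\Gamma$ as fixed and taking $L \geq 1$ with $T$ large, I would show $\log(1+TL^2/(d\lambda)) \leq c_1 \log(TL)$ for a constant $c_1$ depending only on $\lambda$ and $d$; this is just the observation that $1 + TL^2/(d\lambda)$ grows polynomially in $T$ and $L$. Consequently the leading factor is $O(\sqrt{Td\log(TL)})$, carrying one $\sqrt{d}$, one $\sqrt{T}$, and one $\sqrt{\log(TL)}$.

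Next I would handle the trailing factor via $\sqrt{a+b}\le\sqrt a+\sqrt b$, splitting $\sqrt{2\log T + d\log(1+TL^2/(d\lambda))}$ into a $\sqrt{\log T}$ piece and a $\sqrt{d\log(TL)}$ piece. Since $\log T \leq \log(TL)$ and $d \geq 1$, both pieces, together with the constant $\sqrt{\lambda}M$ (which is dominated for large $T$), are $O(\sqrt{d\log(TL)})$, so the trailing factor is $O(\sqrt{d\log(TL)})$, carrying a second $\sqrt{d}$ and a second $\sqrt{\log(TL)}$. The crucial algebraic step is then the multiplication, in which the two $\sqrt{\log(TL)}$ factors combine into a single $\log(TL)$ and the two $\sqrt d$ factors combine into $d$:
\[
R_T \leq O\!\bigl(\sqrt{Td\log(TL)}\cdot \sqrt{d\log(TL)}\bigr) = O\!\bigl(d\sqrt{T}\log(TL)\bigr),
\]
so absorbing $2+\Gamma$, $\sqrt{\lambda}M$, $c_1$, and the remaining numerical constants into $C'$ yields $R_T \le C' d\sqrt{T}\log(TL)$.

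I expect the only real subtlety to be bookkeeping rather than depth: one must be explicit about which symbols ($\lambda, M, L, \Gamma$) are folded into $C'$ versus which ($d, T$) are tracked, and must justify the regime ($L \geq 1$, $T$ sufficiently large) in which $\log(1+TL^2/(d\lambda))$ is dominated by $\log(TL)$. The one genuinely load-bearing observation is that a product of two $\sqrt{\log}$ terms is a single $\log$ rather than a $\sqrt{\log}$, which is precisely what upgrades the naive $\sqrt{T}\,\sqrt{\log}$ scaling to the advertised $\sqrt{T}\log$ form and matches Corollary~\ref{simple_bound} for LinUCB up to the constant.
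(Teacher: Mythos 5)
Your proof is correct and takes the same route the paper intends: the paper states this corollary without an explicit proof (mirroring Corollary 19.3 of Lattimore--Szepesv\'ari for LinUCB), and your steps---substituting $\delta = 1/T$, bounding the recurring logarithm by a multiple of $\log(TL)$, and multiplying the two $\sqrt{d\log(TL)}$-type factors so that the $\sqrt{\log}$'s combine into a single $\log$ and the $\sqrt{d}$'s into $d$---are exactly the intended simplification. The only slip is that your constant $c_1$ must not be allowed to depend on $d$ (otherwise $C'$ would silently absorb a $d$-dependence that the bound $C' d\sqrt{T}\log(TL)$ is meant to make explicit); this is immediately repaired by noting $1 + TL^2/(d\lambda) \leq 1 + TL^2/\lambda$ for $d \geq 1$, so that $c_1$ depends on $\lambda$ alone.
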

Comparing Corollary \ref{our_simplified_bound}  with Corollary \ref{simple_bound} (for LinUCB), we can see the regret bound of Fair-LinUCB is worse than the original LinUCB only up to an additive constant. This perfectly matches the intuition that Fair-LinUCB is able to keep aware of the fairness and guarantee there is no reward gap between different subgroups or individuals, however, it suffers from a relatively higher regret.

\section{Experimental Evaluation}

\subsection{Experiment Setup}
\subsubsection{Simulated Dataset}
There are presently no publicly available datasets that fits our environment. We therefore generate one simulated dataset for our experiments by combining the following two publicly available datasets.
\begin{itemize}
    \item \textbf{Adult dataset}: The Adult dataset \cite{Dua:2019} is used to represent the students (or bandit players). It is composed of 31,561 instances: 21,790 males and 10,771 females, each having 8 categorical variables (work class, education, marital status, occupation, relationship, race, sex, native-country) and 3 continuous variables (age, education number, hours per week), yielding an overall of 107 features after one-hot encoding. 
    \item \textbf{YouTube dataset}: The Statistics and Social Network of YouTube Videos \footnote{https://netsg.cs.sfu.ca/youtubedata/} dataset is used to represent the items to be recommended (or arms). It is composed of 1,580 instances each having 6 categorical features (age of video, length of video, number of views, rate, ratings, number of comments), yielding a total of 25 features after one-hot encoding. We add a 26\textsuperscript{th} feature used to represent the gender of the speaker in the video which is drawn from a Bernoulli distribution with  the probability of success as 0.5.
\end{itemize}

The feature contexts  $\mathbf{x}_{t,a}$ used throughout the experiment is the concatenation of both the student feature vector and the video feature vector. In our experiments we choose the sensitive attribute to be the \textbf{gender of adults}, and we therefore focus on the unfairness on the group-level for the male group and female group. Furthermore, we assume that a male student prefers a video featuring a male and a female student prefers a video featuring a female speaker. Thus, in order to maintain the linear assumption of the reward function, we add an extra binary variable in the feature context vector that represents whether or not the gender of the student matches the gender of the speaker in the video. Overall, $\mathbf{x}_{t,a}$ contains a total of 134 features.

For our experiments, we use a subset of 5,000 random instances from the Adult dataset, which is then split into two subsets: one for training and one for testing. The training subset is composed of 1,500 male individuals and 1,500 female individuals whilst the testing subset is composed of 1000 males and 1000 females. Similarly, a subset of YouTube dataset is used as our pool of videos to recommend (or arms). The  subset contains 30 videos featuring a male speaker and 70 videos featuring a female speaker.

\subsubsection{Reward Function}
We compare our Fair-LinUCB against the original LinUCB using a simple reward function wherein we manually set the $\bm{\theta}^*$ coefficients. The reward $r$ is defined as  
\begin{align*}
    r &= \theta^*_1 \cdot x_1 +
    \theta^*_2 \cdot x_2 + \theta^*_3 \cdot x_3
\end{align*}
where $\theta^*_1 = 0.3$, $\theta^*_2 = 0.4$, $\theta^*_3 = 0.3$ and $x_1 = \text{video rating}$, $x_2 = \text{education level}$, $x_3 = \text{gender match}$. The remaining $d-3$ coefficients are set to 0. Hence, only these three features matter to generate our true reward. The gender match is set to 1 if both the student gender and the gender of the video match, and 0 otherwise. The education level is divided into 5 subgroups each represented by a value ranging from 0.0 to 1.0 with a higher education level yielding a higher value. In our setup, the education level is used to represent the strength of the student. Similarly, the video rating varies from 0 to 1.0, and is used to represent the educational quality of the video. Evidently, a higher reward is generated when the gender of the student matches the gender of the video. 

\subsubsection{Evaluation Metrics}
Throughout our experiments we measure the effectiveness of the algorithms through the average utility loss. Since we know the true reward function, we can derive the optimal reward at each round $t$. We can thus define \[\text{utility loss} = \frac{1}{T}\sum_{t=1}^{T}(r_{t,a^*} - r_{t,a})\] 
where $r_{t,a^*}$ is the optimal reward at round $t$ by choosing arm $a^*$ and $r_{t,a}$ is the observed reward by the algorithm after picking arm $a$.

We measure the fairness of the algorithms through the absolute value of the difference between the cumulative mean reward ($\bar{r}_t$, as introduced in Section \ref{prob}) of the male group and female group:
\[ \text{reward difference} = |\bar{r}_t^{s^+}-\bar{r}_t^{s^-}| \]
Additionally, for all following figures the left hand side plots the cumulative mean reward during the training phase whilst the right hand side reflects the cumulative mean reward over the testing dataset. Due to space limit, all tables report measures on the testing data solely. Note that the contextual bandit continues to learn throughout both phases. 

\subsubsection{Baselines}
As existing fair bandits algorithms focus on item-side fairness, we mainly compare our Fair-LinUCB against LinUCB in terms of utility-fairness trade-off in our evaluations. We also report a comparison with a simple fair LinUCB method that  suppresses the unfairness by removing  the sensitive attribute and all its correlated attributes from the context. We name this method as Naive in our evaluation.

\subsection{Comparison with Baselines}

\subsubsection{Comparison with LinUCB}
Our first experiment compares the performances of the traditional LinUCB against our Fair-LinUCB, using the reward function $r$ described in the previous section. Figure \ref{exp1} plots the cumulative mean reward of both the male and female groups over time. We can notice that the cumulative mean rewards of both groups suffer a discrepancy with LinUCB, and the outcome can therefore be considered unfair towards the male group. Indeed, as shown on Figure \ref{LinUCB1} the cumulative mean reward of the female group (0.839) is greater than the cumulative mean reward of the male group (0.802), yielding a reward difference of 0.037. The utility loss incurred  is 0.050. In contrast, Fair-LinUCB  is able to seal the reward discrepancy with a $\gamma$ coefficient set to 3 (Figure \ref{fairCB1}). Our algorithm thereby achieves a cumulative mean reward of 0.819 for both the male group and the female group, which yields a reward difference of 0.0, while incurring a utility loss of 0.052. Our Fair-LinUCB  outperforms the traditional LinUCB in terms of reward difference while suffering a slight loss of utility. The comparison results are summarized in the first two rows of Table \ref{corr_features}. 

\begin{table}
\centering
\caption{Comparison of Three Algorithms under Reward Function $r$}
\begin{tabular}{c||c|c} 
    & Utility Loss & Reward difference \\
    \hline
    \hline
Fair-LinUCB ($\gamma$ = 3) & 0.052 & \textbf{0.000}\\ \hline    
LinUCB & 0.050 & 0.037\\ 
    \hline 
Naive & \textbf{0.046} & 0.035\\ 
\end{tabular}
\label{corr_features}
\end{table}

To evaluate how the inclusion or exclusion of sensitive attributes affects the fairness-utility tradeoff, we compare LinUCB against Fair-LinUCB with a modified reward function:
\begin{align*}
     r_2 = \theta^*_1 \cdot x_1 + \theta^*_2 \cdot x_2
\end{align*}
where $\theta^*_1 = 0.5$ and $\theta^*_2 = 0.5$ and $x_1 = \text{video rating}$, $x_2 = \text{education level}$
The remaining $d-2$ coefficients are set to 0.  $r_2$ is not dependent upon the gender match attribute and expects to incur zero or small discrepancy between both groups. As depicted on Figure \ref{exp2}, both LinUCB and Fair-LinUCB show a very low cumulative mean reward discrepancy. Specifically, LinUCB incurs a utility loss of 0.037 and a reward difference of 0.006, while Fair-LinUCB incurs 0.034 utility loss and a reward difference of 0.008. Furthermore, in this case, although Fair-LinUCB has additional constraints for the arm picking strategy due to the fairness penalty, it does not induce any loss of utility when compared to LinUCB.  

\begin{figure*}[th!]
    \centering
    \begin{subfigure}[t]{0.5\textwidth}
        \centering
        \includegraphics[width=2.7in]{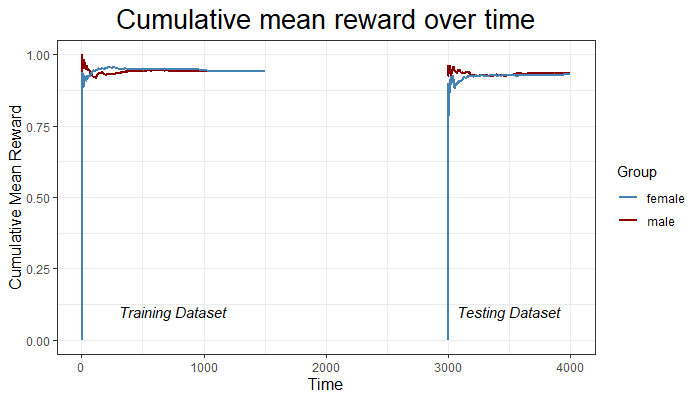}
        \caption{LinUCB}
        \label{LinUCB2}
    \end{subfigure}%
    ~ 
    \begin{subfigure}[t]{0.5\textwidth}
        \centering
        \includegraphics[width=2.7in]{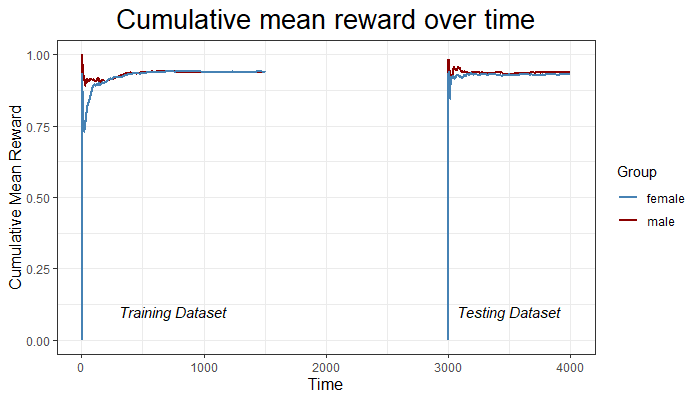}
        \caption{Fair-LinUCB $\gamma$ = 3}
        \label{fairCB2}
    \end{subfigure}
    \caption{LinUCB vs Fair-LinUCB with reward function $r_2$}
    \label{exp2}
\end{figure*}

\subsubsection{Comparison with Naive}

Naive method tries to achieve fairness by removing from the context the sensitive attribute and the features that are highly correlated with the sensitive attribute. In our experiment, we first compute the correlation matrix of all the user's features and then remove the gender feature as well as all features that are highly correlated with it. Specifically, features that have a correlation coefficient greater than 0.3 were removed, which include the following: is male, is female, is divorced, is married, is widowed, is a husband, has an administrative clerical job, has a salary less than 50k. 
We report in the last row of Table \ref{corr_features} the utility loss and reward difference of Naive with reward function $r$. 

We can see  the reward discrepancy between the male and female groups from the Naive method is 0.035, thus showing it cannot  completely remove discrimination.  The utility loss from the Naive method is 0.046, which is only slightly smaller than  LinUCB and Fair-LinUCB. In fact, as shown in Table \ref{fairness_tradeoff}, Fair-LinUCB with $\gamma=2$ can outperform the Naive method in terms of both fairness and utility. In short, removing the gender information and highly correlated features from the context does not necessarily close the gap of the reward difference. 

In summary, although LinUCB learns to pick the arm that maximizes the reward given a particular context, we have seen that it could incur discrimination towards a group of users in some cases. 
Fair-LinUCB is capable of detecting when unfairness occurs, and will adapt its arm picking strategy accordingly so as to be as fair as possible and reduce any reward discrepancy. When a reward discrepancy is not detected, our algorithm does not need to adjust the arm picking strategy and therefore performs as well as the traditional LinUCB.
\begin{figure*}[th!]
    \centering
    \begin{subfigure}[t]{0.5\textwidth}
        \centering
        \includegraphics[width=2.7in]{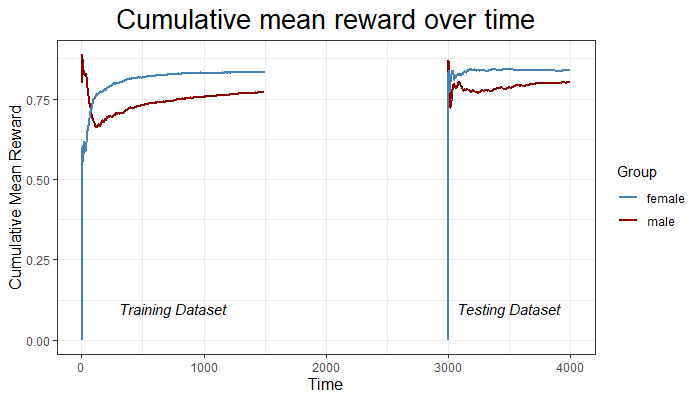}
        \caption{LinUCB}
        \label{LinUCB1}
    \end{subfigure}%
    ~ 
    \begin{subfigure}[t]{0.5\textwidth}
        \centering
        \includegraphics[width=2.7in]{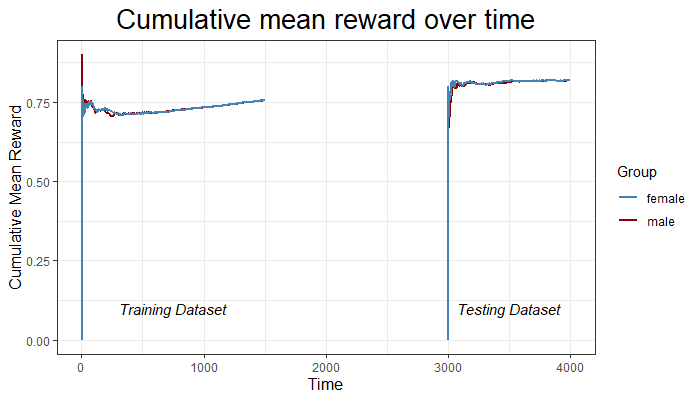}
        \caption{Fair-LinUCB $\gamma$ = 3}
        \label{fairCB1}
    \end{subfigure}
    \caption{LinUCB vs Fair-LinUCB with reward function $r$}
    \label{exp1}
\end{figure*}

\subsection{Impact of  $\gamma$ on Fairness-Utility Trade-off}
The $\gamma$ coefficient introduced in Section \ref{fairnessCB} controls the weight of the fairness penalty that the algorithm will exert onto the UCB value. Indeed, as shown in Equation \eqref{mapping_function}, $\gamma$ is used to adjust the upper bound of the linear mapping function $\mathcal{L}(\gamma, F_a)$. Thus, when the $\gamma$ coefficient increases, the range of the fairness penalty increases proportionally which will consequently increase the UCB value in Equation \ref{fairpta}. The $\gamma$ coefficient therefore reflects the significance of the fairness of Fair-LinUCB. However, as $\gamma$ becomes larger, the fairness penalty becomes out of proportion to the extent of neglecting the importance of the UCB value, thereby decreasing the utility of the algorithm. 
%TABLE
\begin{table}
\centering
\caption{Impact of $\gamma$ on the Fairness-Utility Trade-off}
\begin{tabular}{c||c|c} 
    & Utility Loss & Reward difference \\
    \hline
    \hline
$\gamma = 0$ & 0.050 & 0.037\\ 
    \hline
$\gamma = 1$ & 0.040 & 0.016\\ 
    \hline
$\gamma = 2$ & \textbf{0.035} & 0.004\\ 
    \hline
$\gamma = 3$ & 0.052 & \textbf{0.000}\\ 
    \hline 
$\gamma = 4$ & 0.081 & \textbf{0.000}\\
\end{tabular}
\label{fairness_tradeoff}
\end{table}

To evaluate the fairness-utility trade-off of Fair-LinUCB, we compare several $\gamma$ values and report the fairness and utility loss in Table \ref{fairness_tradeoff}. With a $\gamma$ equal to 0, our algorithm behaves as a traditional LinUCB, therefore it incurs discrimination (reward difference measured at 0.037), and a utility loss of 0.050 is reported. We can observe that when $\gamma$ increases slightly, the algorithm improves the reward difference and loss of utility. Specifically, a reward difference of 0.016 is achieved for $\gamma$ = 1 with a utility loss of 0.040, and a reward difference of 0.004 with a utility loss of 0.035 is achieved with $\gamma$ = 2. Although the utility losses are improved, they both remain not fair. In our best case scenario, with $\gamma$ = 3, the algorithm is completely fair, i.e., reward difference is 0.000, with a utility loss of 0.052. Finally, when the $\gamma$ coefficient is too large, the algorithm prioritizes fairness over utility, resulting in a fair algorithm that suffers a greater loss of utility. For example, with a $\gamma$ set to 4, Fair-LinUCB incurs a utility loss of 0.081. 

\subsection{Impact of arm and user distributions}

In certain cases the distribution of the arms (videos) or the users can significantly impact the cumulative mean reward of some groups of users, and therefore incur the large reward difference. In our experiment, given the reward function $r$, we first explore the impact of the ratio of gender arms, i.e., videos by female or male speakers, and then we investigate the impact of the order of the data in which the algorithm learns. The following results discuss our findings.
\subsubsection{Gender arm ratio}
We explore the effect of three different arm ratio values: (1) 70\% male and 30\% female, (2) 50\% male and 50\% female, and (3) 30\% male and 70\% female. Table \ref{ucbarmratio} reports the utility loss, reward difference, as well as both the cumulative mean reward for the male and female groups. As observed with the LinUCB performances, the arm ratio induces unfairness on some user group.
Indeed, when there is a majority of male arms, it appears that the male user group will benefit more and will have a higher cumulative mean reward. Likewise, when the arms have more females than males, the female user group will benefit more than the male user group, and will therefore have a higher cumulative mean reward. Although having a balanced ratio of male and female arms minimizes the reward difference, it is not always feasible or convenient to adjust the arms distribution in practice.

We ran the same experiment with Fair-LinUCB with $\gamma$ = 3. As we can see, in all three cases, Fair-LinUCB yields a very low reward difference. Indeed, our Fair-LinUCB learns which group is being discriminated and adjusts its arm picking strategy accordingly so as to remove any discrimination, it however suffers a higher utility loss than LinUCB. Note that a $\gamma$ different than 3 could yield a better utility loss for the ratios 7:3 and 1:1. 

Thus, as opposed to a traditional LinUCB which only learns to maximize the reward given a context, our Fair-LinUCB learns how to achieve fairness at the same time, making  it  robust against factors that would otherwise induce unfairness.
%TABLE
\begin{table}
\centering
\caption{ Impact of different arm ratio on the fairness and utility}
\begin{tabular}{c||c|c|c|c} 
    \multirow{2}{*}{Arm ratio} & \multirow{2}{*}{Utility Loss} & Reward & Male & Female\\
    m:f & & difference & cmr & cmr\\
    \hline
    \multicolumn{5}{c}{LinUCB}\\
    \hline
    \hline
  7:3 & 0.061 & 0.029 & 0.824 & 0.795\\
    \hline
  1:1 & 0.053 & 0.012 & 0.824 & 0.812\\ 
    \hline
  3:7 & \textbf{0.050} & 0.037 & 0.802 & 0.839\\  
    \hline
    \hline
    \multicolumn{5}{c}{Fair-LinUCB $\gamma = 3$}\\
    \hline
    \hline
  7:3 & 0.087 & 0.001 & 0.784 & 0.783\\ 
    \hline
  1:1 & 0.162 & \textbf{0.000} & 0.709 & 0.709\\ 
    \hline
  3:7 & 0.052 & \textbf{0.000} & 0.819 & 0.819\\ 
\end{tabular}
\label{ucbarmratio}
\end{table}

\subsubsection{Order of the training data} 
\begin{figure*}[th!]
    \centering
        \begin{subfigure}[t]{0.4\textwidth}
        \centering
        \includegraphics[height=4cm]{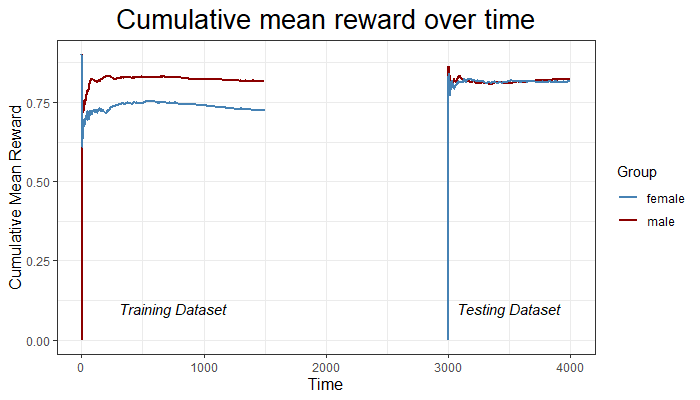}
        \caption{1,500 females followed by 1,000 males}
        \label{order_dataa}
    \end{subfigure}
    ~ 
    \begin{subfigure}[t]{0.4\textwidth}
        \centering
        \includegraphics[height=4cm]{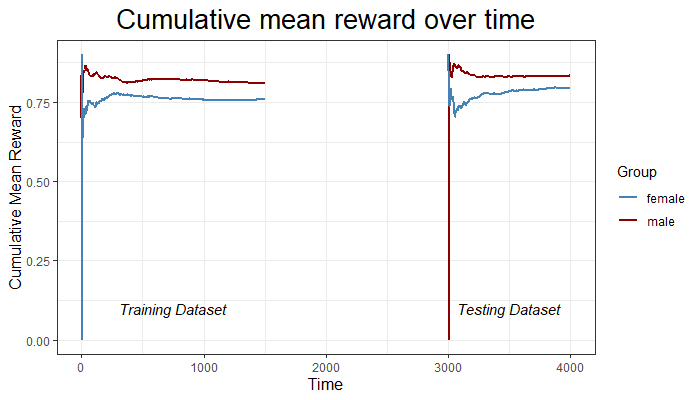}
        \caption{1,500 males followed by 1,000 females}
        \label{order_datab}
    \end{subfigure}
    \caption*{LinUCB}
    ~ 
    \begin{subfigure}[t]{0.4\textwidth}
        \centering
        \includegraphics[height=4cm]{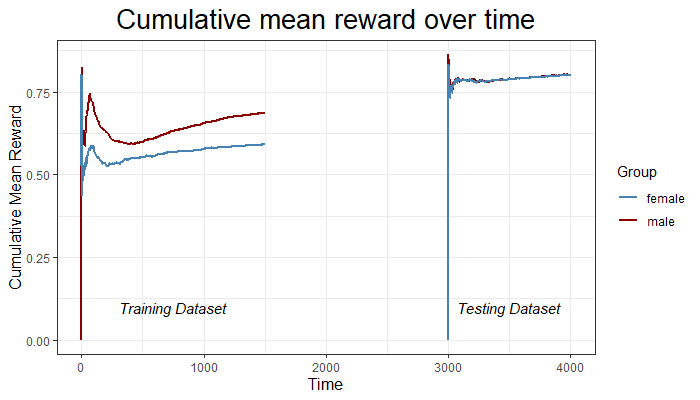}
        \caption{1,500 females followed by 1,000 males}
        \label{order_datac}
    \end{subfigure}
    ~ 
    \begin{subfigure}[t]{0.4\textwidth}
        \centering
        \includegraphics[height=4cm]{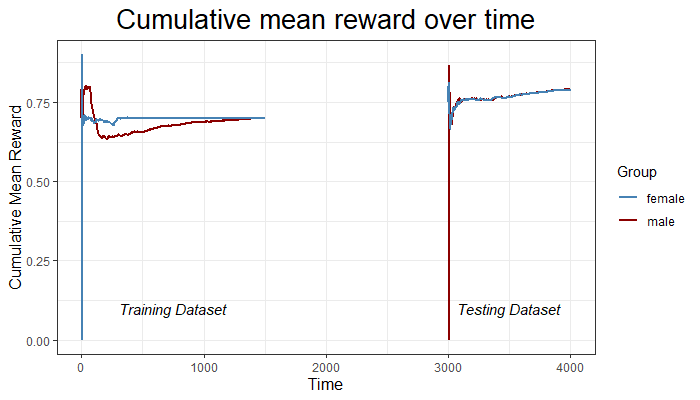}
        \caption{1,500 males followed by 1,000 females}
        \label{order_datad}
    \end{subfigure}
    \caption*{Fair-LinUCB $\gamma$ = 3}
    \caption{Impact of the order of the data on the performances}
    \label{order_data}
\end{figure*}
It is our intuition that the order of the data in which LinUCB learns to recommend an item could affect its recommendation choice or arm pick. 

In these experiments, we use the 70\% male and 30\% female arms setting, and we manually change the order of the training data. In the first setting, we manually set the order of the students in the training data by having all 1,500 female students followed by the 1,500 males instances. In the second setting we order the data by having all 1,500 male instances first, followed by the 1,500 female instances. The test data remains shuffled. We then compare LinUCB with Fair-LinUCB  in order to see the impact on the learning strategy of both algorithms.

We ran the traditional LinUCB and report the cumulative mean reward of the male user group and female user group over time. As shown in Figure \ref{order_dataa} and Figure \ref{order_datab}, overall the male group gets a higher cumulative mean reward than the female group. Particularly, the male group achieves 0.822 against 0.816 for the female group in Figure \ref{order_dataa} and 0.834 against 0.795 in Figure \ref{order_datab}. However, we notice that the reward discrepancy is much higher in the second scenario as compared to the first one. From Figure \ref{order_dataa}, it appears that learning to recommend videos to all females students prior to recommending videos to any male students affects the recommendation process positively (i.e., it yields a higher cumulative mean reward for the female group). Thus, the order of the training data can sometimes affect the recommendation process of LinUCB, which can impact the recommendation outcomes and may also induce discrimination towards one group.

We ran the same experiments with Fair-LinUCB, using a $\gamma$ coefficient of 3, and we report our results in Figure \ref{order_datac} and Figure \ref{order_datad}.  We notice that in both situations our Fair-LinUCB remains very fair, that is, we do not observe a cumulative mean reward discrepancy between the male and female user group. In the former setting, both groups achieve a cumulative mean reward of 0.802 against 0.789 in the latter, both yielding a cumulative mean reward difference of 0.00. In addition, we notice that regardless of the order of the training data our Fair-LinUCB performs equivalently in both scenarios. However, the gain in fairness also induces a loss of utility. Indeed, in the first setting LinUCB achieves 0.052 utility loss against 0.070 for Fair-LinUCB. In the second setting, LinUCB achieves 0.057 against 0.082 for Fair-LinUCB. Thus, our results indicate that Fair-LinUCB is able to close the reward discrepancy and is robust against scenarios that might otherwise induce unfairness.

\section{Conclusion}
Previous research have shown that personalized recommendation can be highly effective at a cost of introducing unfairness. In this paper, we have proposed a fair contextual bandit algorithm for personalized recommendation. While current research in fair recommendation mainly focus on how to achieve fairness on the items that are being recommended, our work differs by focusing on fairness on the individuals whom are being recommended an item. Specifically, we aim to recommend items to users while insuring that both the protected group and privileged group improve their learning performance equally. Our developed Fair-LinUCB improves upon the state-of-the-art LinUCB algorithm by automatically detecting unfairness, and adjusting its arm-picking strategy such that it maximizes the fairness outcome. We further provided a regret analysis of our fair contextual bandit algorithm and demonstrate that the regret bound is only worse than LinUCB up to an additive constant. Finally, we evaluate the performances of our Fair-LinUCB  against that of LinUCB by comparing both their effectiveness and degree of fairness. Experimental evaluations showed that our Fair-LinUCB achieves competitive effectiveness while outperforming LinUCB in terms of fairness. We further showed that our algorithm is robust against numerous factors that would otherwise induce or increase discrimination in the traditional LinUCB algorithm. In this work we made a linear assumption on the reward function. In the future work, we plan to extend the user-level fairness to more general cases and make it easier to be implemented in multifarious reward functions. We plan to develop heuristics to determine the appropriate value for the fairness-accuracy trade off parameter $\gamma$. We also plan to study user-side fairness in the multiple choice linear bandits, e.g., recommending multiple videos to a student at each round. Finally, we plan to study how to achieve individual fairness in bandits algorithms. 

\section{Acknowledgement}
This work was supported in part by NSF 1937010, 1940093, 1940076, and 1940236.\\

{\bf \noindent Reproducibility}.
The source code and datasets are available at \url{https://www.dropbox.com/s/44bwtnxs0j8wbw4/Achieving_User-Side_Fairness_in_Contextual_Bandits.zip?dl=0}.

%\bibliographystyle{siamplain}
%\bibliography{ijcai20,paper}

\end{document}